\theoremstyle{plain}
\newtheorem{theorem}{Theorem}[section]
\newtheorem{proposition}[theorem]{Proposition}
\theoremstyle{definition}
\newtheorem{definition}[theorem]{Definition}
\theoremstyle{remark}
\newcounter{casestudy} 
\newcommand{\insight}[2]{%
	\vspace{-0.18cm}%
	\begin{tcolorbox}[colback=gray!5!white,leftrule=2.5mm, boxrule=0.5mm, width=\textwidth+3mm, enlarge left by=-1mm, enlarge right by=-1mm, size=title]
		\textbf{#1}: #2
	\end{tcolorbox}
	\vspace{-0.16cm}%
}
\definecolor{darkergreen}{RGB}{0, 0, 0}
\definecolor{red2}{RGB}{252, 54, 65}
\definecolor{Gray}{gray}{0.85}
\definecolor{bleudefrance}{rgb}{0.19, 0.55, 0.91}
\definecolor{LightGray}{gray}{1}
\definecolor{darkergreen}{RGB}{0, 0, 0}
\def\R{\mathbb{R}}
\newcommand{\norm}[1]{\left\|#1\right\|}
\newcommand{\inner}[1]{\left\langle#1\right\rangle}
\DeclareMathOperator{\E}{\mathbb{E}}
\let\l\relax
\DeclareMathOperator{\l}{\mathbf{\ell}}
\newcommand{\cv}{\boldsymbol{c}}
\newcommand{\wv}{\boldsymbol{w}}
\newcommand{\xv}{\boldsymbol{x}}
\newcommand{\yv}{\boldsymbol{y}}
\newcommand{\vI}{\boldsymbol{I}}
\newcommand{\vW}{\boldsymbol{W}}
\newcommand{\tr}{\text{tr}}
\newcommand{\deltav}{\boldsymbol{\delta}}
\newcommand{\gammav}{\boldsymbol{\gamma}}
\newcommand{\Scal}{\mathcal{S}}
\newlength\newl
\newcommand{\vth}{\bm{\theta}}
\newcommand{\myparagraph}{\textbf}
\title{In Search of the \textit{Successful} Interpolation:\\ On the Role of \textit{Sharpness} in CLIP Generalization}
\author{%
  Alireza Abdollahpoorrostam \\
  Department of Computer Science $\&$ Communication Systems\\
  EPFL\\
  Switzerland, Lausanne\\
  \texttt{alireza.abdollahpoorrostam@epfl.ch} \\
}
\begin{document}

\maketitle

\begin{abstract}
\textit{Zero-shot} models like CLIP are often fine-tuned on a target dataset to improve its accuracy further, but this can compromise out-of-distribution (OOD) robustness. Robust Fine-Tuning (\texttt{RFT}
)~\citep{wortsman2021robust}, which interpolates between the \textit{zero-shot} and \textit{fine-tuned} models, has been proposed to address this issue. However, understanding when \texttt{RFT} actually improves OOD error remains limited. In this work, we empirically investigate the robustness of \texttt{RFT} in CLIP models, with a focus on the \textit{sharpness} of the CLIP model during interpolation. First, we demonstrate that while sharpness may not serve as a reliable indicator for predicting the generalization of modern architectures like CLIP on OOD data, this challenges the conventional belief in the generalization benefits of flat minima in foundation models. However, by examining the role of the \textit{straggler layer} phenomenon, we show that, unlike overall sharpness, the \textit{layer-wise} sharpness of \textit{straggler} layers can reliably capture the generalization performance of interpolated CLIP models on OOD data.
Our extensive experiments reveal that \textit{layer-wise} sharpness correlates with generalization in OOD accuracy for \texttt{RFT}. Furthermore, we demonstrate that by inducing sparsity in the \textit{straggler} layers, we can mitigate the \textit{failure mode} phenomenon in \texttt{RFT}. To the best of our knowledge, this is the first work to study the role of sharpness in the \textit{success} of interpolation in the weight space of CLIP foundation models. Our code is
available at \url{https://github.com/alirezaabdollahpour/CLIP_Mode_Connectivity}. \looseness=-1
\end{abstract}

\section{Introduction}
Understanding the behavior of large machine learning models like CLIP~\citep{radford2021learning} on OOD tasks is important for their safe deployment. Analyzing their behavior on a path between the initial and the final parameters has been proposed as a simple yet insightful approach this. However, prior works~\citep{vlaar2022can,lucas2021analyzing,neyshabur2020being,draxler2018essentially,entezari2021role,Chatterji2020} has primarily focused on CNN models for this analysis and whether such analysis extends to other kinds of architecture has not been thoroughly explored.
On the other hand, several works have shown that while foundation models like CLIP exhibit outstanding zero-shot OOD performance, this can be further improved if they are fine-tuned on the relevant target domain. However, this improvement comes at the cost of reduced performance on domains that it is not trained on. To solve this problem, inspired by the above-mentioned works on interpolation in CNNs,~\citet{RFT} showed that on the path connecting the \textit{zero-shot} model and the final \textit{fine-tuned} model, there exists a model with better OOD performance and proposed an algorithm called~\emph{Robust Fine Tuning}~(\texttt{RFT}) to find this parameter.
However, \texttt{RFT} does not always succeed in achieving large improvement in OOD accuracy compared to the \textit{zero-shot} model, and very little understanding exists of when the improvement is large and when it isn't. In this work, we aim to address this lack of knowledge. Inspired by earlier work on the interpolation between two CNN models, we first provide extensive experimental results to examine the correlation between the weight space geometry and CLIP's capability to generalize on OOD tasks. We aim to address the following question:
\vspace*{-7pt}
\begin{quote}
	\textit{How does sharpness on OOD samples relate to CLIP generalization?}
\end{quote}
\vspace*{-8pt}
Second, we investigate the role of the specific layer's sharpness on CLIP's OOD generalization. In particular, we ask the following question:
\vspace*{-7pt}
\begin{quote}
	\textit{What occurs within a layer during interpolation that leads to a \textit{failure mode}? By measuring the sharpness of that layer during interpolation, can we predict its impact on generalization?}
\end{quote}
\vspace*{-8pt}


\textbf{Robust Fine-Tuning}~(\texttt{RFT}) method has two steps: first, they fine-tune the \textit{zero-shot} model on the target distribution. Second,
they combine the original \textit{zero-shot} and fine-tuned models
by linearly interpolating between their weights, coined as
weight-space ensembling.
Nevertheless, the connection between linear interpolation and OOD generalization for CLIP has not been thoroughly investigated. The question of why the linear interpolation between~\textit{zero-shot} and fine-tuned CLIP models succeeds in OOD tasks, and the conditions under which the linear path between two CLIP models indicates robust generalization performance on OOD tasks, remains an unresolved problem. The recent advancements in the understanding of loss landscapes in CNNs and their connection to generalization through linear paths have prompted \cite{abdolahpourrostam2024unveiling} to revisit these findings within the context of foundation models like CLIP. \cite{abdolahpourrostam2024unveiling} aims to bridge the gap between the assumptions made about linear interpolation and loss landscape geometry in CNNs and the generalization capabilities of CLIP. Their study seeks to identify the conditions under which linear interpolation can be \textit{successfully} applied between two CLIP models, with particular attention to the roles of data augmentation and learning rate magnitude during the fine-tuning process.\looseness=-1

\myparagraph{On the role of sharpness:}
There is a body of literature suggesting that flatter minima may have better generalization properties \citep{xing2018walksgd, zhou2021theoreticallyunderstandingsgdgeneralizes, cha2021swaddomaingeneralizationseeking, park2022visiontransformerswork, lyu2023understandinggeneralizationbenefitnormalization, andriushchenko2023modernlookrelationshipsharpness} for standard or OOD data. However, the definitions of sharpness commonly used in the field do not align effectively with the concept of generalization, as discussed \citep{ kaur2023maximumhessianeigenvaluegeneralization} this can be primarily due to the model's lack of invariance under reparametrizations that not change the model \citep{dinh2017sharpminimageneralizedeep, granziol2020flatnessfalsefriend, zhang2021flatnessdoesdoescorrelate, andriushchenko2023modernlookrelationshipsharpness}. 
The utilization of adaptive sharpness seems to hold more potential as it effectively resolves the reparametrization problem and has been demonstrated to exhibit a stronger empirical correlation with generalization. \citep{kwon2021asamadaptivesharpnessawareminimization, andriushchenko2023modernlookrelationshipsharpness}. Furthermore, SAM demonstrates notable utility in emerging architectures such as vision transformers \citep{chen2022visiontransformersoutperformresnets, andriushchenko2023modernlookrelationshipsharpness}. In addition, although transfer learning has become the prevailing method for vision problems, the consequences of sharpness in this context have not been thoroughly investigated. Furthermore, the correlation between sharpness and OOD generalization has not been extensively examined. These rising innovations highlight the necessity to reevaluate the significance of sharpness in these new environments.

\subsection{Background on Interpolation and Notations}
\myparagraph{Loss barrier.}
For loss landscapes, \textit{barriers} refer to regions of increased loss encountered along the interpolation path between two sets of model parameters.\looseness=-1

We examine a CLIP architecture that is parametrized by $\mathcal{\vth}$ and is fine-tuned on a task represented by a training set $S_{\text{train}}$ and a test set $S_{\text{test}}$.
In the following, as we are interested in the generalization of CLIP on OOD tasks, we consider OOD loss and accuracy and write $\mathcal{L}(\mathcal{\vth}), \mathcal{A}(\mathcal{\vth})$ for $\mathcal{L}(\mathcal{\vth}, S_{\text{OOD}}), \mathcal{A}(\mathcal{\vth}, S_{\text{OOD}})$. Assume that we have fixed two different different sets of weights $\mathcal{\vth}_0$ and $\mathcal{\vth}_1$. Let $\mathcal{L}_{\alpha}(\mathcal{\vth}_0, \mathcal{\vth}_1) = \mathcal{L}(\alpha \mathcal{\vth}_0 + (1 - \alpha) \mathcal{\vth}_1)$ and $\mathcal{A}_{\alpha}(\mathcal{\vth}_0, \mathcal{\vth}_1) = \mathcal{A}(\alpha \mathcal{\vth}_0 + (1 - \alpha) \mathcal{\vth}_1)$ for $\alpha \in [0, 1]$ be the loss and accuracy, respectively, of the CLIP network created by linearly interpolating between $\mathcal{\vth}_0$ and $\mathcal{\vth}_1$. Then, building upon the~\cite{frankle2020linear} definition for linear interpolation instability, we define it for CLIP on OOD as the following notion.\looseness=-1

\myparagraph{Definition 1.}
\textit{The difference between the supremum of the loss for any interpolation $\sup_{\alpha} \mathcal{L}_{\alpha}(\mathbb{\vth}_0, \mathbb{\vth}_1)$ and the average loss of the endpoints $\frac{1}{2} (\mathcal{L}(\mathbb{\vth}_0) + \mathcal{L}(\mathbb{\vth}_1))$ is called the linear interpolation instability for the CLIP on OOD.}\looseness=-1

Recall that \textit{zero-shot} CLIP performs better on OOD tasks compared to the fine-tuned version of CLIP. Within the same settings of~\citep{RFT,wortsman2022model}, we are interested in exploring the linear path between \textit{zero-shot} CLIP and fine-tuned CLIP. Therefore, we set $\mathcal{\vth}_{0}$ as \textit{zero-shot} model.\looseness=-1

Two parametrizations $\mathcal{\vth}_0$ and $\mathcal{\vth}_1$ have a \textbf{barrier} between them if the linear interpolation instability for \textbf{\textit{sufficiently}} large $\delta$, there exists an $\alpha \in [0, 1]$ such that:
\begin{equation}
	\sup_{\alpha} \mathcal{L}_{\alpha}(\mathbb{\vth}_0, \mathbb{\vth}_1; S_{\text{OOD}}) - \mathcal{L}(\mathbb{\vth}_0; S_{\text{OOD}}) \geq \delta > 0
\end{equation}
The value of \(\delta\) can be empirically determined for each OOD task.
Similarly, we state that linear interpolation or the \texttt{RFT} algorithm can achieve \textbf{\textit{high gain accuracy}} if there exists an $\alpha \in [0, 1]$ such that:
\begin{equation}
	\label{eq:Acc}
	\sup_{\alpha} \mathcal{A}_{\alpha}(\mathcal{\vth}_0, \mathcal{\vth}_1; S_{\text{OOD}}) - \mathcal{A}(\mathcal{\vth}_0; S_{\text{OOD}}) \geq \xi > 0\vspace{-5pt}
\end{equation}

where \(\xi\) is \textbf{\textit{sufficiently}} large. 

Also, we define a linear path as having a \textit{gain} if the \textit{supremum} in Eq.~\ref{eq:Acc} exists with $(\xi > 0 )$. It is important to mention that a path is considered a \textit{\textbf{failure mode}} if the \textit{supremum} in Eq.~\ref{eq:Acc} does not exist. Figure~\ref{fig:Acc_Loss} illustrates scenarios in which several distinct \textit{fine-tuned} CLIP models experience either \textit{failure mode} or \textit{high gain accuracy} outcomes during the interpolation~(\texttt{RFT}).
\begin{figure}[h!] 
	\centering 
	\small
	\tabcolsep=1.0pt
	\begin{tabular}{@{}c@{}c@{}}
		\textbf{Failure Mode} & \textbf{High Gain Accuracy} \\
		\includegraphics[width=0.5\textwidth]{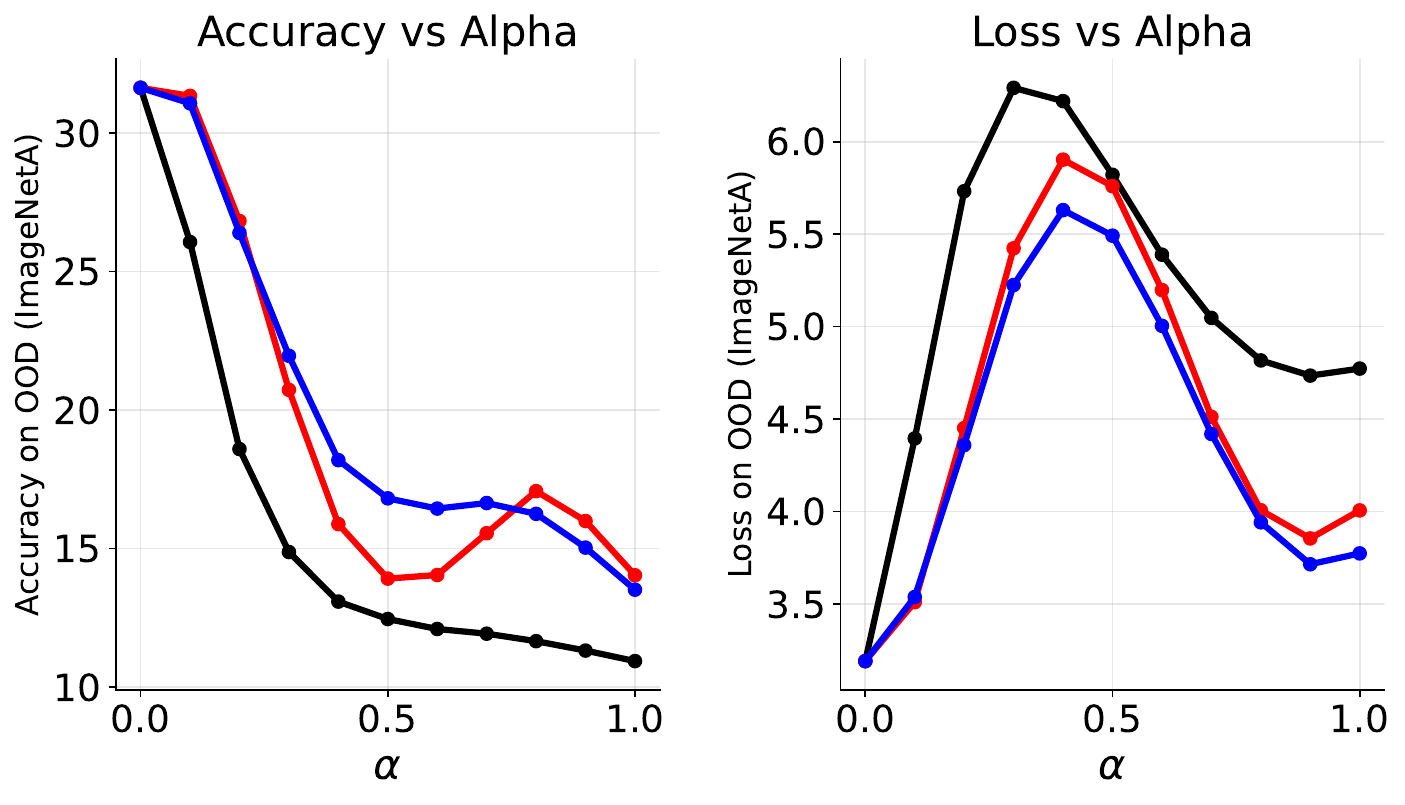} & 
		\includegraphics[width=0.5\textwidth]{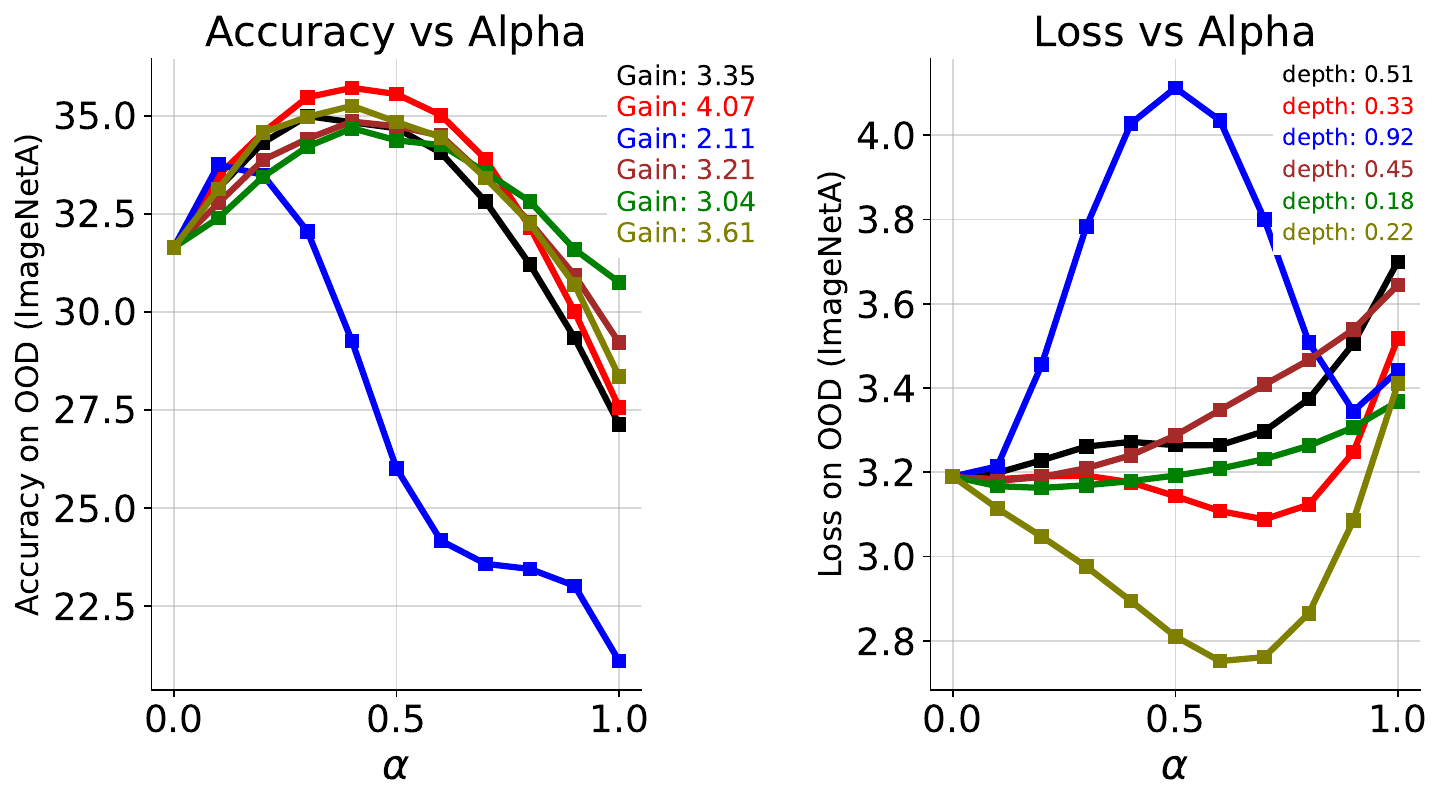} 
	\end{tabular}
	\vspace{-2.5mm}
	\caption{For 9 distinct fine-tuned CLIP models~(each color shows different CLIP models) on ImageNet~\citep{deng2009imagenet},
		this plot demonstrates the accuracy and loss on ImageNet-A~\citep{hendrycks2021natural} as an OOD task. For each model, we show the maximum accuracy gain achieved along a corresponding interpolation path. In the loss plot, we show \textbf{depth} as the largest barrier on the interpolation path starting from the \textit{zero-shot} model.}
	\label{fig:Acc_Loss}
\end{figure}

\myparagraph{Layer-wise interpolation.}
In the following, we analogously define a layer-wise notion of instability. 
Let $\mathcal{M}$ be structured in $L$ layers $\{\vW^{(1)}, \dots, \vW^{(L)}\}$.
In our experiments, we consider both weights and bias as one set of parameters describing a layer.
Let us fix a layer $\vW^{(i)}$.
Consider a parametrization that is defined by $\alpha$, $\mathcal{W}_1$ and $\mathcal{W}_2$ as $\{\vW^{(1)}_j, \vW^{(2)}_j, \dots, \alpha \vW^{(i)}_1 + (1-\alpha)\vW^{(i)}_2, \dots, \vW^{(L)}_j\}$ where $j$ can be selected to be $1$ or $2$.
\begin{definition}\textit{(Layer-wise linear interpolation instability)
	\label{def:one_lw}
	The difference between supremum of the loss on the line $\text{sup}_{\alpha} \mathcal{L}_{\alpha, i}(\mathcal{W}_1, \mathcal{W}_2)$ corresponding to layer $\vW^{(i)}$ and average loss of the original models $\frac{1}{2}(\mathcal{L}(\mathcal{W}_1)+\mathcal{L}(\mathcal{W}_2))$ is the \textbf{layer-wise linear interpolation instability} for the given architecture $\mathcal{M}$ and selected layer~(A similar approach can be employed to analyze this phenomenon by evaluating the accuracy on OOD data.).}
\end{definition}

\begin{definition}\textit{(Straggler layer)
If a layer demonstrates layer-wise interpolation instability, it is referred to as a straggler layer.}
\end{definition}

We are particularly interested in layers where linear interpolation leads to a \textit{\textbf{failure mode}} in terms of accuracy on OOD data. In other words, if a layer exhibits \textit{\textbf{layer-wise interpolation instability}}, it manifests this \textit{\textbf{failure mode}} phenomenon.

\textbf{Note}: Since we utilize the weights of the \textit{zero-shot} CLIP model, denoted as $\mathcal{W}_1$ ($\mathbb{\vth}_0$), and the weights $\mathcal{W}_2$ from the \textit{fine-tuned} CLIP model ($\mathbb{\vth}_1$ or $\mathbb{\vth}_{\text{FT}}$), we assign the \textit{zero-shot} CLIP weights to all layers except the target layer $i$. This approach allows us to specifically analyze the performance of layer $i$ in the \textit{fine-tuned} CLIP model.



\begin{figure}[t!] \centering\small
	\tabcolsep=1.1pt
	\newl=.495\columnwidth
	\begin{tabular}{c c} 
		\hspace{3mm} \textbf{Failure Mode} & \hspace{3mm} \textbf{High Gain Accuracy} \\
		\includegraphics[width=\newl]{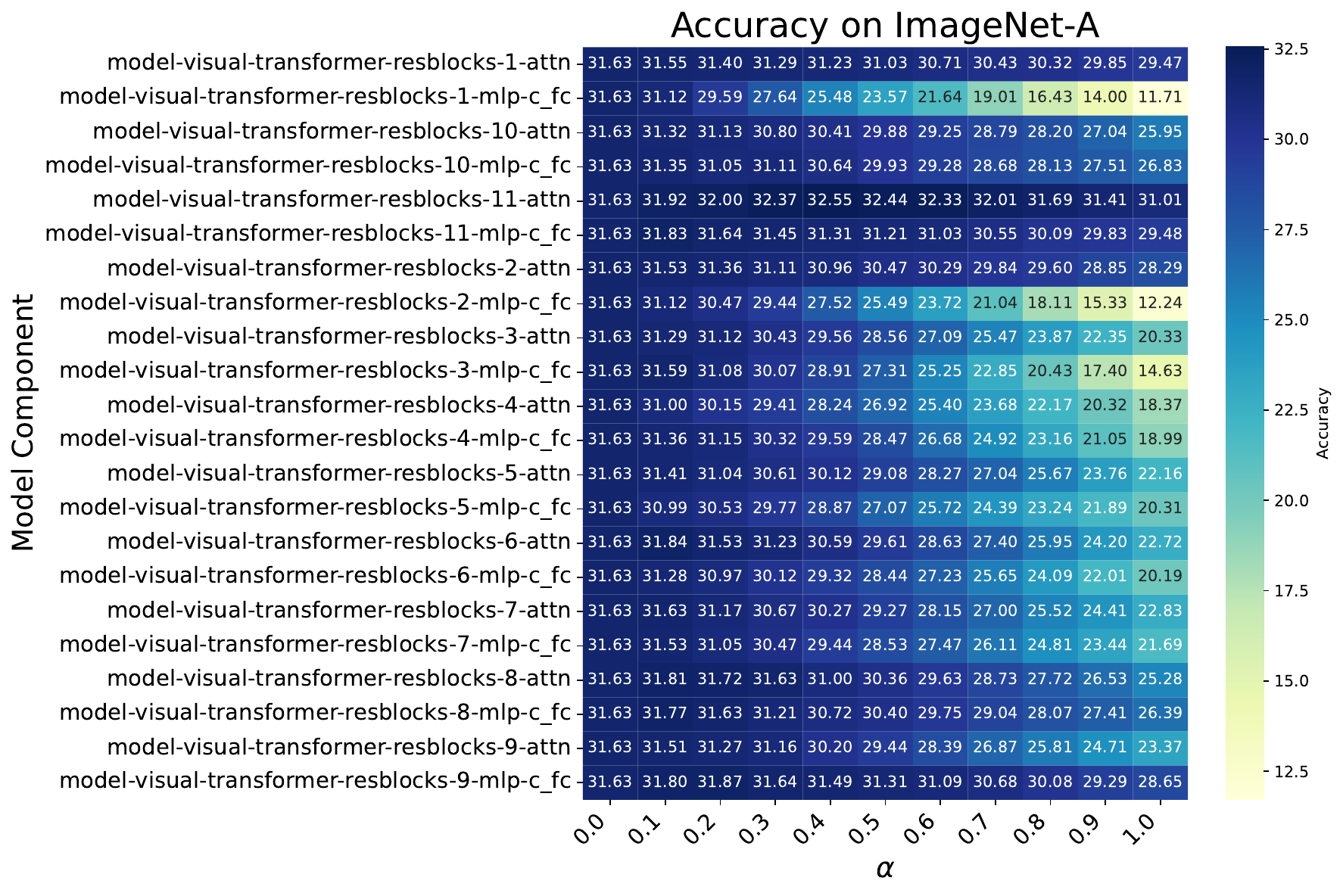} &
		\includegraphics[width=\newl]{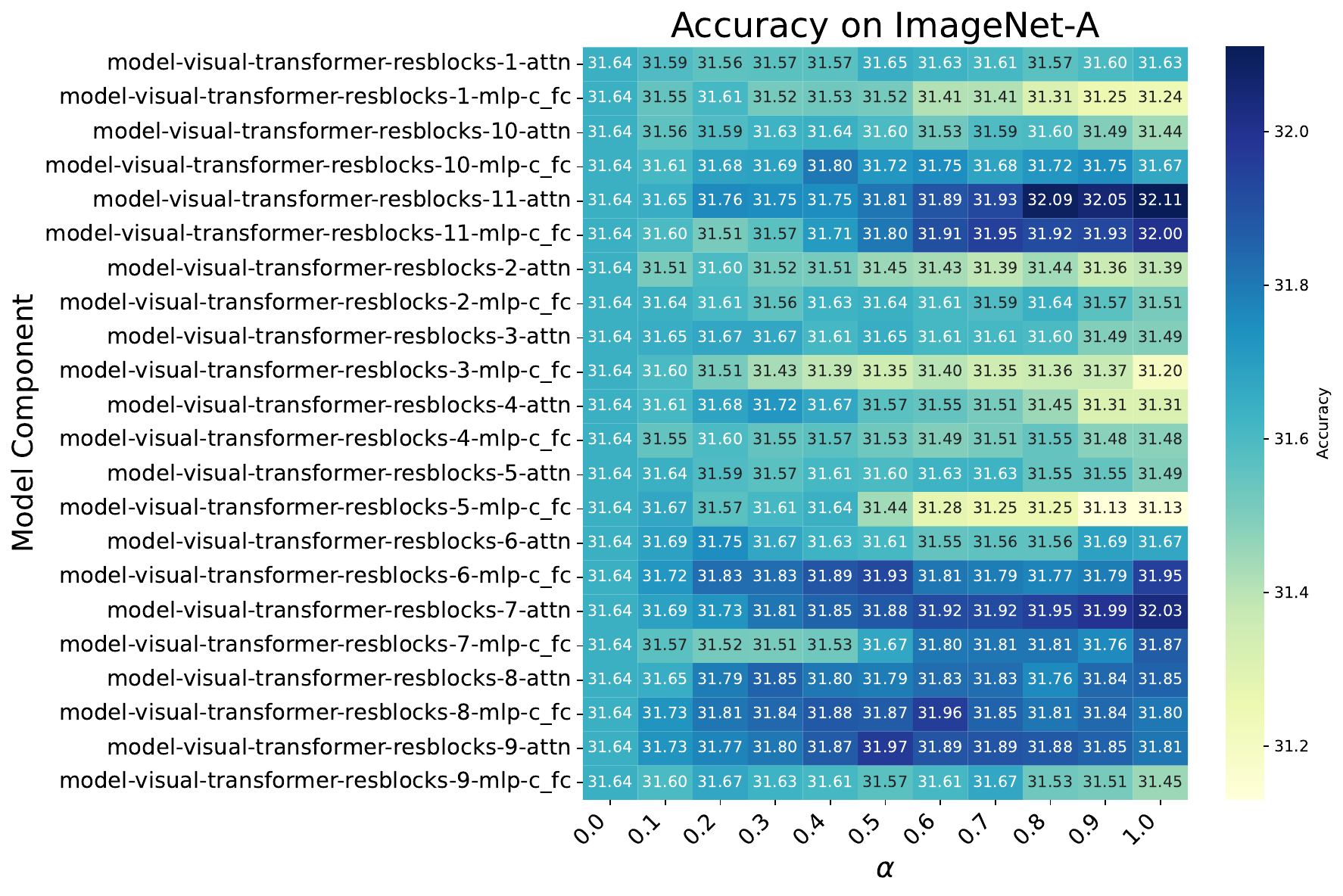}\\
	\end{tabular}
	\vspace{-4mm}
	\caption{\textbf{Layer-wise interpolation on ImageNet-A as OOD.} For two distinct fine-tuned CLIP models one exhibiting \textit{failure mode} and the other \textit{high gain accuracy} in regular interpolation~(\texttt{RFT}), we conduct a \textit{layer-wise} interpolation alongside each layer with the \textit{zero-shot} CLIP model.} 
	\label{fig:heatmap}
\end{figure}

\section{Adaptive Sharpness and its Invariances}
In this section, we begin by providing background on adaptive sharpness and then discuss its invariance properties in modern architectures. We categorize the sharpness of a model into two distinct categories. First, we establish a connection between \textit{general} sharpness and the generalization performance of the interpolated CLIP model. Second, we introduce the concept of \textit{layer-wise} sharpness and, by utilizing the relationship between \textit{straggler} layers and \textit{layer-wise} sharpness, we experimentally demonstrate how the \textit{layer-wise} sharpness of \textit{straggler} layers can capture the generalization performance of interpolated CLIP models.

\subsection{Background on Sharpness}
\myparagraph{Sharpness definitions.}
Similar to \citep{andriushchenko2023modernlookrelationshipsharpness}, we denote the loss on a set of \textit{OOD} points $\Scal$ as $L_\Scal(\wv) = \frac{1}{|S|} \sum_{(\xv, \yv) \in \Scal} \l_{\xv\yv}(\wv)$, where $\ell_{\xv \yv}(\wv) \in \R_+$ represents some loss function (e.g., cross-entropy) on the pair $(\xv, \yv) \in \Scal$ computed with the network weights $\wv$.
For arbitrary $\wv \in \R^p$ (i.e., not necessarily a minimum), we define the \textit{average-case} and \textit{adaptive average-case} sharpness with radius $\rho$ and with respect to a vector $\cv \in \R^p$ as:
\begin{align} \label{eq:sharpness}
	S_{avg}^\rho(\wv, \cv) &\triangleq \E_{\substack{\Scal \sim P_m \ \ \ \ \ \ \\ \deltav \sim \mathcal{N}(0, \rho^2 diag(\cv^2))}} \hspace{-8mm} L_\Scal(\wv + \deltav) - L_\Scal(\wv)
\end{align}
where $\odot$/$^{-1}$ denotes elementwise multiplication/inversion and $P_m$ is the data distribution that returns $m$ pairs $(\xv, \yv)$. 
Using $\cv = |\wv|$ leads to \textit{elementwise} adaptive sharpness \citep{kwon2021asamadaptivesharpnessawareminimization, andriushchenko2023modernlookrelationshipsharpness} and makes the sharpness invariant under multiplicative reparametrizations.
For a thrice differentiable loss $L(\wv)$, the average-case elementwise adaptive sharpness can be computed as (see \cite{andriushchenko2023modernlookrelationshipsharpness} or App.~\ref{sec:app_asymptotic} for proof):
\begin{align} \nonumber
	S_{avg}^\rho(\wv, |\wv|) = &\E_{\Scal \sim P_m} \frac{\rho^2}{2} \tr(\nabla^2 L_\Scal(\wv) \odot |\wv| |\wv|^\top) + O(\rho^3)
	\label{eq:avg_sharpness_small_rho}
\end{align}
We should also mention that the first-order term cancels out completely. 
In order for better clarity, we will use the term \textit{general sharpness}. In the upcoming sections, we will examine the connection between the sharpness of interpolated CLIP models and their generalization performance on OOD data. Next, we present our concept of \textit{layer-wise sharpness}, which entails quantifying the sharpness of \textit{\textbf{one specific layer}} within the CLIP model during interpolation.
\section{Sharpness vs. Generalization}

The current understanding of the relationship between sharpness and generalization is primarily based on experiments with non-residual convolutional networks and small datasets such as CIFAR-10 and SVHN~\citep{jiang2019fantasticgeneralizationmeasures}. \cite{andriushchenko2023modernlookrelationshipsharpness} were the first to study the correlation between \textit{general} sharpness and generalization in transformer-based modern architectures, such as fine-tuned CLIP models. Their findings revealed that there is no \textit{strong} correlation between \textit{general} sharpness and generalization on OOD data. Building on their observations, we investigate the correlation between \textit{general} sharpness and interpolation. Additionally, we introduce the concept of \textit{layer-wise} sharpness and demonstrate how, unlike \textit{general} sharpness, it can effectively capture the generalization performance during interpolation in weight space between \textit{zero-shot} and \textit{fine-tuned} CLIP models.

\begin{wrapfigure}[19]{r}{0.5\textwidth}
	\vspace{-0.65cm}
	\includegraphics[width=0.9\linewidth]{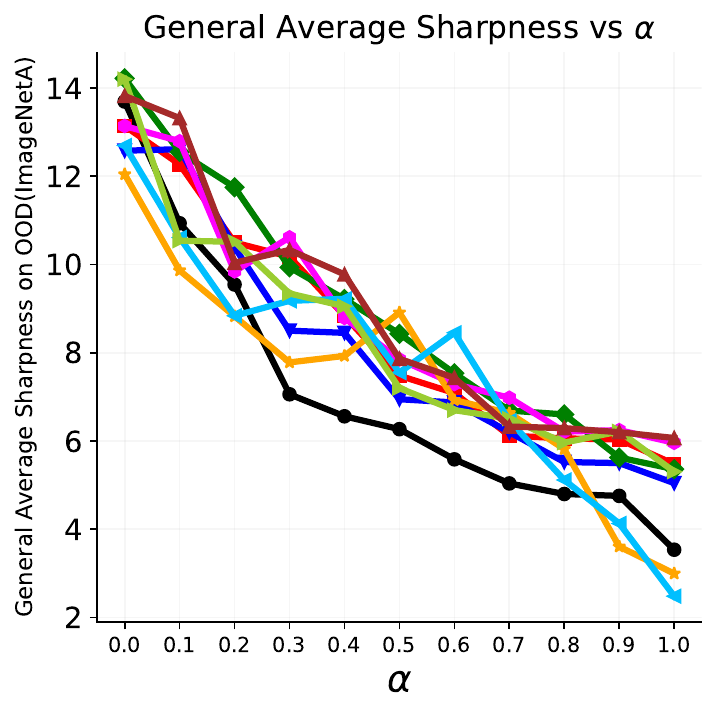}
	\vspace{-0.3cm} 
	\caption{For 9 distinct fine-tuned CLIP models~(each color shows different CLIP models) on ImageNet,
		this plot demonstrates the \textit{general adaptive average sharpness} with $\rho=1.0$ and $20$ iterations on ImageNet-A as an OOD task.}
	\label{fig:genral_sharpness}
\end{wrapfigure}
In Fig.~\ref{fig:genral_sharpness}, we demonstrate that \textit{general sharpness} \textit{\textbf{fails}} to directly capture the generalization of interpolated CLIP models on OOD data. Contrary to our expectations, CLIP models fine-tuned on ImageNet indicate that flatter solutions consistently generalize worse on OOD data. This evidence suggests that the commonly held belief in the generalization benefits of flat minima does not hold true in modern settings. This result corroborates the findings of \cite{andriushchenko2023modernlookrelationshipsharpness}, specifically for \textit{fine-tuned} CLIP models on OOD data.

\subsection{Layer-wise Sharpness}
In this part, we introduce the concept of \textit{layer-wise sharpness}, where we perturb the weight space of the target layer in the \textit{fine-tuned} CLIP model during interpolation. Subsequently, we perform interpolation between this newly perturbed \textit{fine-tuned} CLIP model and the \textit{zero-shot} CLIP model. Notably, we do not conduct \textit{layer-wise} interpolation; instead, we apply the previously described \texttt{RFT} algorithm. Informally speaking, we want answer to this question:
\insight{\textbf{\textit{Question}}}{\textit{What occurs within a layer during interpolation that leads to \textit{layer-wise interpolation instability} or a \textit{failure mode}? By measuring the sharpness of that layer during interpolation, can we predict its impact on generalization?}}
Furthermore, we empirically investigate what occurs immediately after $\alpha^{\star}$ in \textit{high gain accuracy} models. As shown in Fig.~\ref{fig:Acc_Loss}, for these models, we consistently observe a point along the interpolation path where the interpolated model reaches \textit{maximum} accuracy. Beyond this point, a decline in performance begins. 
\begin{figure*}[t!] 
	\centering 
	\small
	\begin{tabular}{@{}c@{}c@{}}
		\textbf{High Gain Accuracy} &
		\textbf{Failure Mode} \\
		\includegraphics[width=0.5\linewidth]{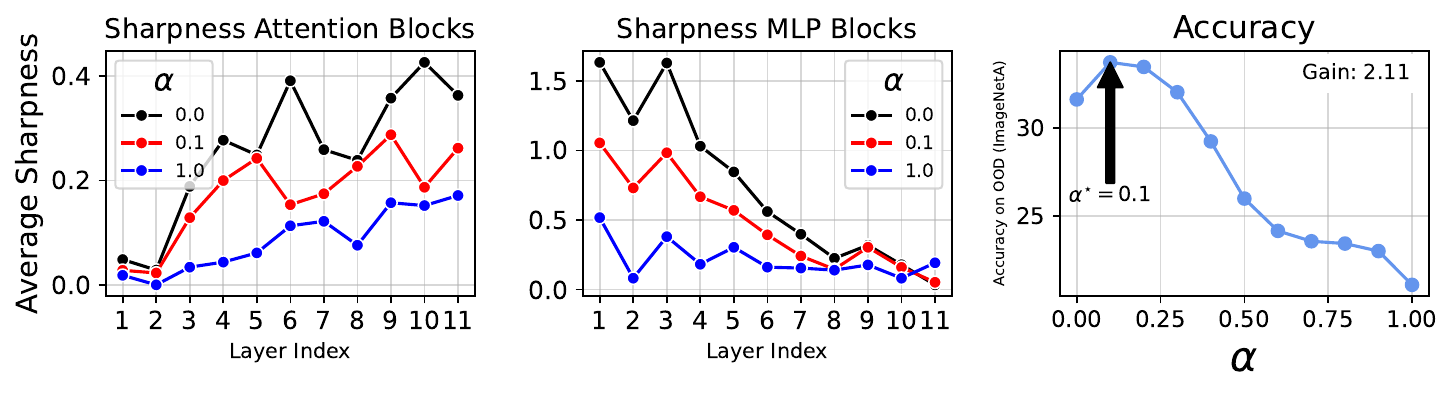} &
		\includegraphics[width=0.5\linewidth]{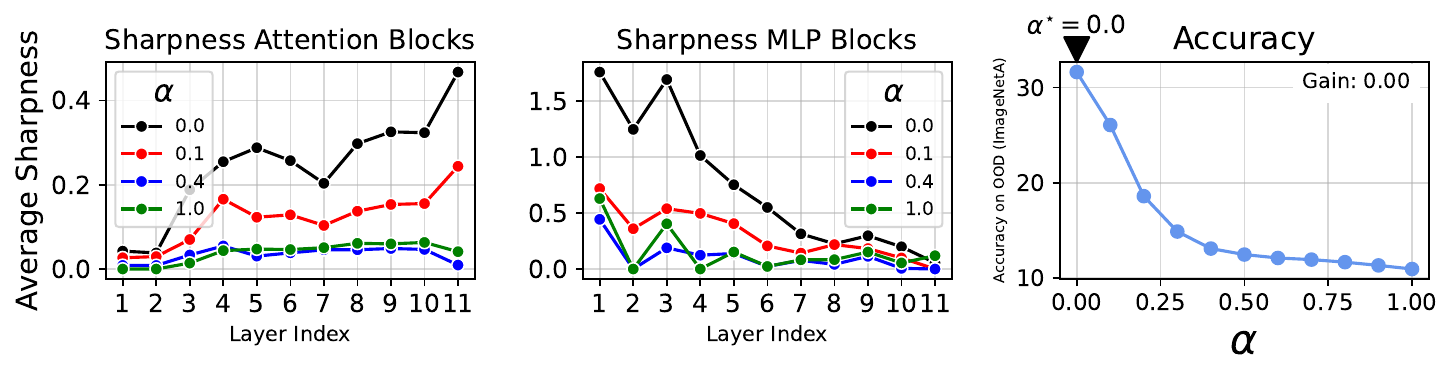} \\
		\includegraphics[width=0.5\linewidth]{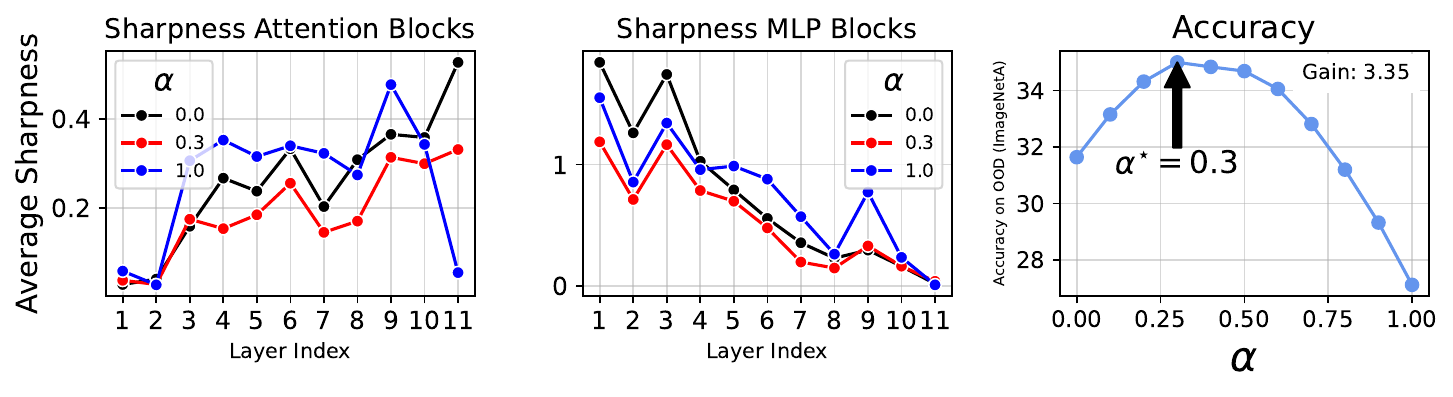} &
		\includegraphics[width=0.5\linewidth]{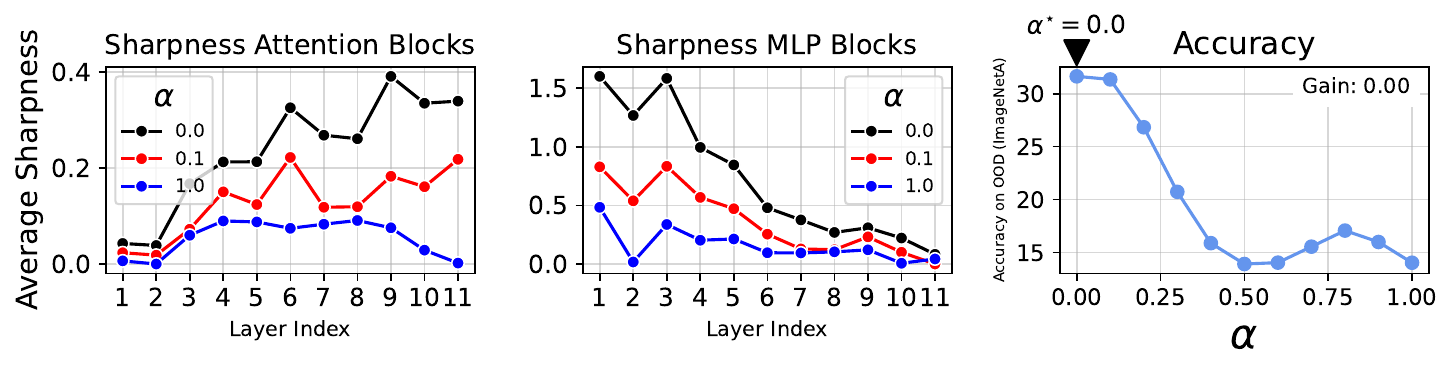} \\
	\end{tabular}
	\vspace{-3mm}
	\caption{We present an analysis of the \textit{layer-wise} sharpness across four distinct CLIP models, comprising two \textit{failure mode} models and two \textit{high gain accuracy} models, demonstrating the sharpness characteristics of each individual layer.} 
	\label{fig:layer_wise_sharpness_complete}
\end{figure*}
Figure \ref{fig:layer_wise_sharpness_complete} demonstrates that in models with \textit{high gain accuracy} (second row), the optimal $\alpha^{\star}$ corresponds to a point where the interpolated model achieves \textit{maximum} generalization performance. However, within this model, there is at \textit{least one layer} where the \textit{layer-wise} sharpness is \textit{nearly zero}. On the other hand, for \textit{failure mode} models, it is already known that there is no point along the interpolated path where the OOD accuracy surpasses that of the starting and ending points. Consequently, $\alpha^{\star}$ is exactly at the starting point (the \textit{zero-shot} point). For \textit{failure mode} models, it can be observed that there is at least one layer where the \textit{layer-wise} sharpness is nearly zero. In fact, fine-tuned \textit{failure mode} models inherently possess a \textit{straggler layer}. In the following section, we evaluate our \textit{layer-wise} sharpness and \textit{straggler} layer in a different direction. We introduce a straightforward algorithm based on the \textit{layer-wise} sharpness of the \textit{straggler} layers.

\myparagraph{On the role of Sparsity for Generalization and \texttt{RFT}.}
\vspace{-3mm}
\begin{algorithm}[H]
	\caption{Pytorch Pseudocode for Straggler Layer Pruning}
	\label{alg:sparse}
	\begin{algorithmic}[1]
		\REQUIRE Model $\mathcal{M}$ structured in $L$ layers $\{\vW^{(1)}, \dots, \vW^{(L)}\}$, \textit{zero-shot} model $\vth_{\texttt{zero-shot}}$.
		\FOR{$i = 1$ to $L$}
		\IF{\texttt{Adaptive Average Sharpness}($\vW^{(i)}$, $\rho$) $\simeq 0$}
		\STATE mask $\leftarrow$ \texttt{torch.bernoulli}(\texttt{torch.full\_like}( $\mathcal{M}$[$\vW^{(i)}$], 0.5)).\texttt{bool()}
		\STATE $\mathcal{M}$[$\vW^{(i)}$][mask] $\leftarrow 0$
		\ENDIF
		\ENDFOR
		\STATE $\vth_{\alpha}=$ \texttt{interpolation}($\vth_{\texttt{zero-shot}}$, $\mathcal{M}$)
		\RETURN $\vth_{\alpha}$
	\end{algorithmic}
\end{algorithm}
\vspace{-5mm}
Our objective is to establish a connection between the \textit{layer-wise} sharpness of \textit{straggler} layers and the generalization performance of the interpolated model. While the primary aim of this work is not to introduce a new algorithm that surpasses conventional interpolation methods, we focus on elucidating the importance of the \textit{layer-wise} sharpness phenomenon. First, through five iterations, we identify the \textit{straggler} layers of the fine-tuned CLIP model. Subsequently, we randomly adjust the weights of the identified layers. Specifically, before initiating the interpolation, we make the \textit{straggler} layers \textit{\textbf{sparse}}. In Algorithm~\ref{alg:sparse}, we summarize our algorithm.

\begin{figure}[t!] \centering\small
	\tabcolsep=1.1pt
	\newl=.495\columnwidth
	\begin{tabular}{c c} 
		\hspace{3mm} \textbf{Model 1} & \hspace{3mm} \textbf{Model 2} \\
		\includegraphics[width=\newl]{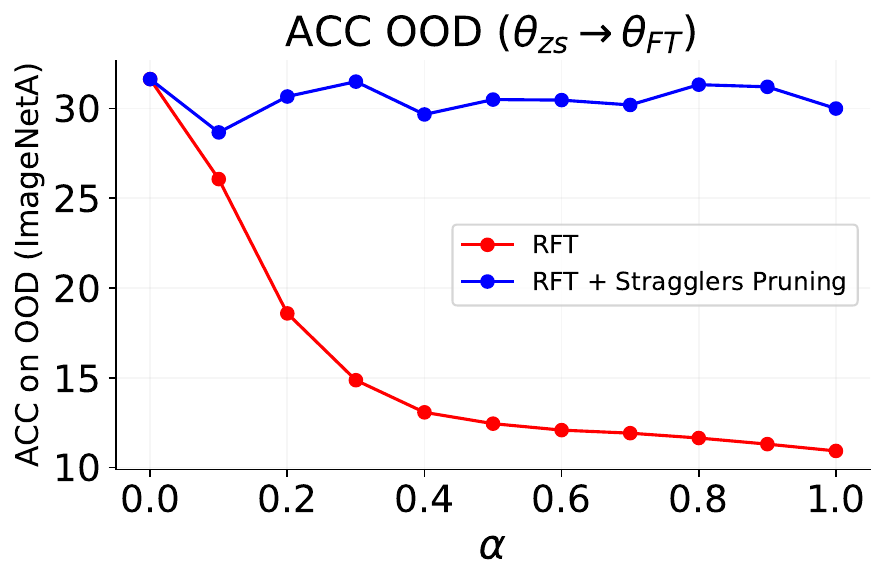} &
		\includegraphics[width=\newl]{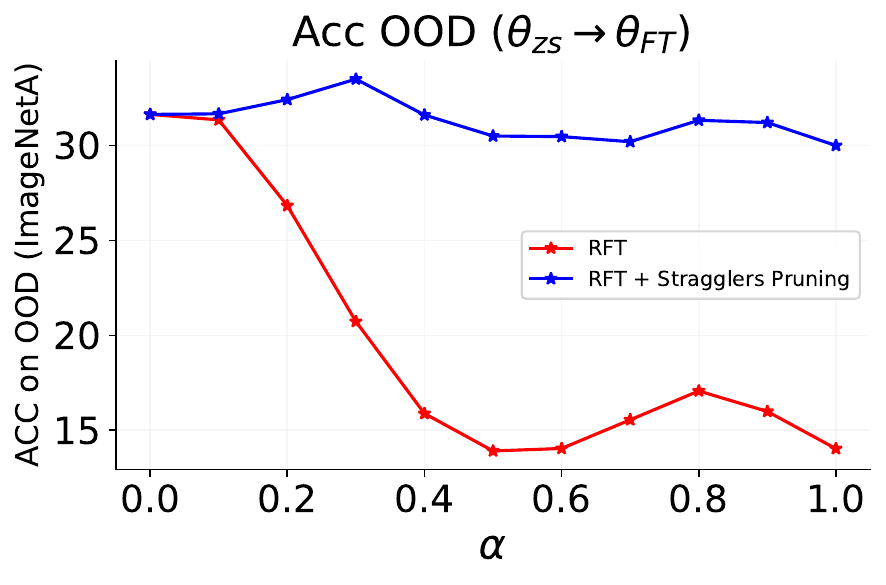}\\
	\end{tabular}
	\vspace{-5.5mm}
	\caption{\textbf{Straggler layer pruning.} For two distinct \textit{fine-tuned} CLIP models that exhibit \textit{failure mode} during interpolation using the \texttt{RFT} algorithm, we demonstrate that pruning the \textit{straggler} layers of the \textit{fine-tuned} model prevents a collapse in performance.} 
	\label{fig:pruning}
\end{figure}

\section{Conclusion and Future works}
In conclusion, our study underscores the critical role of interpolation~(\texttt{RFT}) in enhancing the generalization capabilities of CLIP models for OOD tasks. We demonstrate that by putting specific layers in CLIP models under the microscope, referred to as \textit{straggler layers}, and employing the concept of \textit{layer-wise sharpness} as opposed to the traditional notion of \textit{general sharpness}, we can effectively assess the generalization performance of these interpolated models on OOD data. Our findings indicate that if a \textit{fine-tuned} CLIP model contains at least one layer where the \textit{layer-wise sharpness} is nearly zero, it triggers a \textit{failure mode} phenomenon. Furthermore, for interpolated CLIP models that achieve \textit{high gain accuracy} along the interpolation path, a decline in OOD performance begins when, at the point of maximum OOD accuracy~($\alpha^{\star}$), there exists a layer with nearly zero \textit{layer-wise} sharpness. This specific layer is identified as the \textit{straggler layer}. Importantly, this study is the first to explore the generalization and interpretability of CLIP models, through the lenses of mode connectivity, interpolation and \textit{sharpness}. Our findings provide novel insights into the behavior of these models and their potential for robust application across diverse tasks.

\newpage
\bibliographystyle{plainnat}
\bibliography{example_paper.bib}

\newpage
\appendix

\section{Appendix}
\label{sec:app_asymptotic}
Following to \citep{andriushchenko2023modernlookrelationshipsharpness}, let $L_\Scal(\wv) = \frac{1}{|S|} \sum_{(\xv, \yv) \in \Scal} \l_{\xv\yv}(\wv)$ be the loss on a set of points $\Scal$.
For arbitrary weights $\wv$ (i.e., not necessarily a minimum), then the \textit{average-case sharpness} is defined as:
\begin{align} %
	S_{avg,p}^\rho(\wv, \cv) \triangleq \E_{\substack{\Scal \sim P_m \ \ \ \ \ \ \\ \deltav \sim \mathcal{N}(0, \rho^2 diag(\cv^2))}} \hspace{-8mm} L_\Scal(\wv + \deltav) - L_\Scal(\wv) \nonumber
\end{align}
where $\odot$/$^{-1}$ denotes elementwise multiplication/inversion and $P_m$ is the data distribution that returns $m$ pairs $(\xv, \yv)$.

If $\cv=|\wv|$ then the perturbation set is $\norm{\delta \odot |\wv|^{-1}}_p \leq \rho$. Assume a new variable $\gammav=\deltav \odot |\wv|^{-1}$ and perform a Taylor expansion around $w$:
\begin{align*}
	L_\Scal(\wv+\deltav) = L_\Scal(\wv+\gammav \odot |\wv|)
	= L_\Scal(\wv) + \inner{\nabla L_\Scal(\wv), |\wv| \odot \gammav} + \frac{1}{2}\inner{\gammav \odot |\wv|, \nabla^2 L_\Scal(\wv) \gammav \odot |\wv|} + O(\norm{\gammav}_p^3),
\end{align*}
where $\nabla^2 L_\Scal(\wv)$ denotes the Hessian of $L_\Scal$ at $\wv$.

\begin{proposition} (\cite{andriushchenko2023modernlookrelationshipsharpness}),
	Let $L_\Scal\in C^3(\R^s)$, $S$ be a finite sample of points $(x_i,y_i)_{i=1}^n$ and let $P_m$ denote the uniform distribution over subsamples of size $m\leq n$ from $S$. %
	Then
	\begin{align*}
		\lim_{\rho \to 0} \frac{2}{\rho^2} S_{avg}^\rho(\wv, |\wv|) 
		&=\E_{\Scal \sim P_m} \left[ \tr(\nabla^2 L_\Scal(\wv) \odot |\wv| |\wv|^\top) \right] + O(\rho)
	\end{align*}
\end{proposition}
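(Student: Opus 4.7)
The plan is to establish the stated asymptotic expansion directly, by Taylor-expanding $L_\Scal(\wv+\deltav)$ about $\wv$ to second order with an explicit third-order remainder, integrating against the Gaussian perturbation law $\deltav\sim\mathcal{N}(0,\rho^2\diag(|\wv|^2))$, and finally taking the outer expectation over $\Scal\sim P_m$. The identity to prove is equivalent to showing
\[
S_{avg}^\rho(\wv,|\wv|)\;=\;\frac{\rho^2}{2}\,\E_{\Scal\sim P_m}\bigl[\tr(\nabla^2 L_\Scal(\wv)\odot|\wv||\wv|^\top)\bigr]\;+\;O(\rho^3),
\]
from which dividing by $\rho^2/2$ and passing to the limit produces the displayed claim.

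\paragraph{Main steps in order.}
First, fix $\Scal$ and write the Taylor expansion supplied in the excerpt,
\[
L_\Scal(\wv+\deltav)-L_\Scal(\wv)\;=\;\inner{\nabla L_\Scal(\wv),\deltav}+\tfrac12\deltav^\top\nabla^2 L_\Scal(\wv)\deltav+R_\Scal(\wv,\deltav),
\]
where $R_\Scal$ is the Lagrange remainder controlled by the third derivative of $L_\Scal$, which exists globally because $L_\Scal\in C^3(\R^s)$. Second, take $\E_{\deltav}$: since $\deltav$ is mean-zero Gaussian, the first-order term vanishes, and the second-order term evaluates to $\tfrac12\tr(\nabla^2 L_\Scal(\wv)\,\Sigma)$ with $\Sigma=\rho^2\diag(|\wv|^2)$. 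Using the identity $\tr(A\diag(|\wv|^2))=\sum_i A_{ii}w_i^2=\tr(A\odot|\wv||\wv|^\top)$, this gives exactly $\frac{\rho^2}{2}\tr(\nabla^2 L_\Scal(\wv)\odot|\wv||\wv|^\top)$. Third, take the outer expectation $\E_{\Scal\sim P_m}$, which commutes with the trace since $P_m$ is a finite-support distribution. Fourth, bound the remainder: because $\nabla^3 L_\Scal$ is continuous, it is bounded on any compact neighborhood of $\wv$; combining with the Gaussian moment bound $\E\|\deltav\|^3=O(\rho^3)$ and a tail estimate to handle the non-compact support of $\deltav$ yields $\E_{\deltav}|R_\Scal(\wv,\deltav)|=O(\rho^3)$ uniformly in $\Scal$ (since $P_m$ has finite support). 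Taking $\E_{\Scal}$ preserves this.

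\paragraph{Main obstacle.}
The routine pieces are the vanishing of the first-order term and the explicit form of the second-order Gaussian moment. The one genuine subtlety is handling the third-order remainder globally: Taylor's theorem with Lagrange form of the remainder gives a pointwise bound involving $\sup_{\tv\in[\wv,\wv+\deltav]}\|\nabla^3 L_\Scal(\tv)\|\cdot\|\deltav\|^3$, and since $\deltav$ is Gaussian rather than compactly supported, one must either (i) split the integral over a high-probability ball $\{\|\deltav\|\leq\rho\log(1/\rho)\}$ and its complement, using Gaussian tail decay to send the complement contribution to zero faster than any polynomial in $\rho$, or (ii) impose/inherit a mild local boundedness assumption on $\nabla^3 L_\Scal$ (which is automatic on a neighborhood by continuity). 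Either route produces the $O(\rho^3)$ remainder, which is exactly what the statement asserts.
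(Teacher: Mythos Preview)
Your proposal is correct and follows essentially the same route as the paper: Taylor-expand to second order, use the mean-zero property to kill the linear term, compute the Gaussian second moment to obtain $\frac{\rho^2}{2}\tr(\nabla^2 L_\Scal(\wv)\odot|\wv||\wv|^\top)$, and absorb the remainder into $O(\rho^3)$. The only cosmetic difference is that the paper first substitutes $\gammav=\deltav\odot|\wv|^{-1}\sim\mathcal{N}(0,\rho^2\vI)$ before expanding, whereas you stay with $\deltav$ and use the trace identity $\tr(A\,\diag(|\wv|^2))=\tr(A\odot|\wv||\wv|^\top)$; your treatment of the third-order remainder is in fact more careful than the paper's, which simply writes $O(\rho^3)$ without further justification.
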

\begin{proof} Let us consider the loss without the subcript for clarity. Then we consider
	\[  \E_{\deltav \sim \mathcal{N}(0, \rho^2 diag(\cv^2))} L_\Scal(\wv + \deltav) - L_\Scal(\wv) \]
	When plugging in the Taylor expansion of the loss, we see that 
	\begin{align*}  \E_{\deltav \sim \mathcal{N}(0, \rho^2 diag(\cv^2))} &L_\Scal(\wv + \deltav) - L_\Scal(\wv)\\
		=& \E_{\gammav \in \mathcal{N}(0, \rho^2 \vI)} \Big[ \inner{\nabla L_\Scal(\wv), |\wv| \odot \gammav} + \frac{1}{2}\inner{\gammav \odot |\wv|,\nabla^2 L_\Scal(\wv) \gammav \odot |\wv|} + O(\norm{\gammav}_2^3)\Big]\\
		=&\frac{1}{2}\E_{\gammav \in \mathcal{N}(0, \rho^2 \vI)} 
		\Big[ \inner{\gammav \odot |\wv|,\nabla^2 L_\Scal(\wv) \gammav \odot |\wv|} \Big] + O(\rho^3)\\
		=&\frac{1}{2}\E_{\gammav \in \mathcal{N}(0, \rho^2 \vI)} \Big[\inner{\gammav, \big(\nabla^2 L_\Scal(\wv) \odot |\wv| |\wv|^T\big) \gammav}\Big] + O(\rho^3)\\
		=&\frac{\rho^2}{2}\tr(\nabla^2 L_\Scal(\wv) \odot |\wv| |\wv|^\top) + O(\rho^3) 
	\end{align*}
\end{proof}


\end{document}